\newtheorem{theorem}{Theorem}
\newtheorem{proposition}[theorem]{Proposition}
\newcommand{\R}{\mathbb{R}}
\newcommand{\domain}{\mathbb{X}}
\newcommand{\expectation}{\mathbb{E}}
\newcommand{\E}{\mathbb{E}}
\newcommand{\uei}{\textrm{EI-UU}}
\newcommand{\argmax}{\mathop{\mathrm{argmax}}}
\newcommand{\reals}{\mathbb{R}}
\begin{document}

%
\runningtitle{Multi-attribute Bayesian optimization with interactive preference learning}

%

\twocolumn[

\aistatstitle{Multi-attribute Bayesian optimization\\ with interactive preference learning}

\aistatsauthor{ Raul Astudillo \And Peter I. Frazier }
\aistatsaddress{ Cornell University \And  Cornell University, Uber} 
]


\begin{abstract}
  We consider black-box global optimization of time-consuming-to-evaluate functions on behalf of a decision-maker (DM) whose preferences must be learned. Each feasible design is associated with a time-consuming-to-evaluate vector of attributes and each vector of attributes is assigned a utility by the DM's utility function, which may be learned approximately using preferences expressed over pairs of attribute vectors. Past work has used a point estimate of this utility function as if it were error-free within single-objective optimization. However, utility estimation errors may yield a poor suggested design. Furthermore, this approach produces a single suggested ``best'' design, whereas DMs often prefer to choose from a menu. We propose a novel {\it multi-attribute Bayesian optimization with preference learning} approach. Our approach acknowledges the uncertainty in preference estimation and implicitly chooses designs to evaluate that are good not just for a single estimated utility function but a range of likely ones. The outcome of our approach is a menu of designs and evaluated attributes from which the DM makes a final selection. We demonstrate the value and flexibility of our approach in a variety of experiments. 
\end{abstract}

\begin{bibunit}
\section{INTRODUCTION}

We begin with a motivating example: helping a cancer patient (the ``decision-maker'' or ``DM'') find the best treatment. Cancer treatments exhibit a range of efficacies, side effects and financial costs \citep{aning2012patient, wong2013cancer, marshall2016women}, referred to here as ``attributes''. Suppose a patient considers $k$ real-valued attributes when selecting a cancer treatment. Also suppose a time-consuming-to-evaluate black-box computational simulator can use the patient's medical history to compute the attributes, $f(x)\in\mathbb{R}^{k}$, of treatment $x$. The patient has an implicit preference over these attributes and our goal is to help her find her most preferred treatment by querying our simulator.

One existing approach, pursued within preference-based reinforcement learning \citep{wirth2017survey}, is to first learn a point estimate of the patient's preferences  \citep{dewancker2016,abbas2018foundations} and then optimize assuming this point estimate is correct.  We call this the ``point-estimate approach''.
This approach asks the patient for her preference between attribute vectors $f(x)$ and $f(x')$ corresponding to pairs of treatments $x$ and $x'$, and uses this information to learn a utility function $\widehat{U}:\R^k \rightarrow \R$, e.g., using preference learning with Gaussian processes \citep{chu2005preference}, such that the judgments are as consistent as possible with the estimated utility differences $\widehat{U}(f(x)) - \widehat{U}(f(x'))$.
It then solves $\max_x \widehat{U}(f(x))$ using a 
method for optimizing time-consuming-to-evaluate black-box functions, such as 
Bayesian optimization (BayesOpt) \citep{frazier2018tutorial}, assuming that the estimated utility function is correct. This approach, however, is not robust to residual uncertainty in preference estimates.
 

To better illustrate that becoming robust to uncertainty in preferences can improve performance, suppose that preference learning suggests that the patient's true utility function is close to one of $L$ possible functions $\{U_{\ell}\}_{\ell=1}^{L}$. Then, a better approach would be to offer the patient a set of treatments $\{x_{\ell}^*\}_{\ell=1}^{L}$, where $x_{\ell}^*\in \argmax_x U_\ell(f(x))$, and let her choose among them. 
This will provide near-optimal utility to the patient, while optimizing for a single point estimate of the utility function will not.
While this approach improves over the standard approach in the utility it provides, it requires solving $L$ optimization problems with a time-consuming-to-evaluate objective, which becomes computationally infeasible as $L$ grows. Our approach (described below) delivers similar utility gains using fewer queries to the objective function.

Another approach, which can be used when each attribute is a quantity that the patient wants to be as large (or small) as possible, is to use multi-objective Bayesian optimization \citep{knowles2006,abdolshah2019multi} to estimate the Pareto frontier. This approach, however, typically does not use interaction with the patient to focus optimization on the parts of the Pareto frontier most likely to contain the patient's preferred solution.  Intuitively, such information could accelerate optimization, especially when moderate or large numbers of attributes ($>3$) create high-dimensional Pareto frontiers and lead to many Pareto optimal solutions.

Motivated by the shortcomings of existing approaches, we propose \textit{optimization with preference learning}, which learns preferences from the DM's feedback and acknowledges uncertainty in these learned preferences.
In contrast with the point-estimate approach, 
our approach is significantly more robust to residual preference uncertainty because its optimization actions are appropriate for a range of plausible utility functions. 
In contrast with multi-objective optimization approaches, learned preferences allow our approach to use fewer objective function queries by focusing optimization on portions of the attribute space most likely to be preferred by the DM.
Our approach, therefore, fills an important gap between today's single-objective optimization approaches, which assume perfect knowledge of preferences, and multi-objective optimization approaches, which do not provide a principled way to accommodate partial preference information.

We develop optimization with preference learning within the specific context of Bayesian optimization. We use
pairwise judgments from the DM to form a Bayesian posterior distribution over her utility function and
model the attributes $f$ with a multi-output Gaussian process. We then use one of two novel acquisition functions, the {\it expected improvement under utility uncertainty} (EI-UU) or {\it Thompson sampling under utility uncertainty} (TS-UU), to iteratively choose designs $x$ at which to evaluate $f$. 
Optionally, during optimization, additional DM's judgments on the evaluated designs may be incorporated into our posterior distribution on the utility.
At the conclusion of optimization, a menu of designs is shown to the DM, who makes a final selection.

Our proposed acquisition functions, EI-UU and TS-UU, generalize existing Bayesian optimization acquisition functions to the optimization with preference learning setting.   
EI-UU is more challenging to maximize than its classical counterpart. However, we provide a simulation-based method for computing an unbiased estimator of its gradient,
which we use within 
a multi-start stochastic gradient optimization method.



    

The reminder of this paper is organized as follows. We first formalize our problem setting in \S\ref{sec:problem}, before defining the EI-UU acquisition function in \S\ref{sec:policies}, and reviewing other related work in \S\ref{sec:lit}. \S\ref{sec:experiments} presents numerical experiments, and \S\ref{sec:conclusion} concludes.

\section{PROBLEM SETTING}
\label{sec:problem}
We now formally describe our problem setting.

\subsection{Designs and Attributes}
We assume that both designs and attributes can be represented as vectors. More concretely, we assume that the space of designs can be represented as a compact set $\domain \subset \reals^{d}$, and attributes are given by a derivative-free time-consuming-to-evaluate black-box continuous function, $f:\domain \rightarrow \reals^{k}$. As is common in BayesOpt, we assume that $\domain$ is a simple set such as a hyperrectangle or a polytope, and that $d$ is not too large ($<20$).

\subsection{Decision-Maker's Preferences}
We assume that there is a DM whose preference over designs is characterized by the the designs' attributes through a \textit{Von Neumann-Morgenstern utility function} \citep{vonNeuman}, $U: \reals^{k} \rightarrow \reals$. This implies that the DM  (strictly) prefers a design $x$ over $x'$ if and only if $U(f(x)) > U(f(x'))$. Thus, of all the designs, the DM most prefers one in the set $\argmax_{x\in \domain}U(f(x))$.  
As is standard in preference learning \citep{furnkranz2010preference}, we assume that the DM can provide ordinal preferences between two designs $x$ and $x'$ when shown previously evaluated attribute vectors $f(x)$ and $f(x')$. 



\subsection{Interaction With the Decision-Maker and Computational Model}

In our approach, an algorithm interacts sequentially with a human DM and a time-consuming-to-evaluate objective function (typically a computer model). The algorithm interacts with the computational model simply by selecting a design $x$ and evaluating $f(x)$. We let $x_n$ indicate the $n^\text{th}$ point at which we evaluate $f$.  As is standard in BayesOpt, the first set of evaluations of $f$ is chosen uniformly at random or according to a space-filling design  over the feasible domain \citep{joseph2016space}, and subsequent evaluations are guided by an acquisition function described below in \S\ref{sec:policies}.

The algorithm interacts with the DM by receiving ordinal preferences between pairs of attribute vectors. We index interactions with the DM by $m$, letting $y_m$ and $y'_m$ refer to the attribute vectors queried in this interaction, and $a_m \in \{-1, 0,1\}$ indicating the DM's response, where $a_m = -1$ indicates a preference for $y'_m$, $a_m = 0$ indicates indifference, and $a_m = 1$ indicates preference for $y_m$. We let $m_n$ be the number of design pairs evaluated by the DM by the completion of the $n^\text{th}$ run of the computational model. We envision that the $y_m$ and $y'_m$ would typically be the attribute vectors for previously evaluated designs, $f(x_n)$ and $f(x_{n'})$, where $m \geq \max(m_n,m_{n'})$.

For concreteness, our numerical experiments assume that, before each evaluation of $f$, the DM provides feedback on one pair of designs chosen uniformly at random from among those previously evaluated.  Our framework easily supports other patterns of interaction. For example, it supports a setting where the DM provides feedback in a single batch after the first-stage evaluations of the computational model are complete,
either over random previously evaluated attribute vectors or using a more sophisticated and query-efficient selection of attribute vectors (see, e.g., \citealt{lepird2015bayesian}). It also supports a setting in which the DM provides feedback at a random series of time points on pairs of previously evaluated attribute vectors of her choosing.

\subsection{Statistical Model Over $f$}
As is standard in BayesOpt, we place a (multi-output) Gaussian process (GP) prior on $f$ \citep{alvarez2012kernels}, $ \mathcal{GP}(\mu,K)$, characterized by a  mean function, $\mu:\domain\rightarrow\R^{k}$, and a positive definite covariance function, $K:\domain\times\domain\rightarrow\mathbb{S}_{++}^{k}$\footnote{$\mathbb{S}_{++}^{k}$ denotes the cone of $k\times k$ positive definite matrices.}. Thus, after observing $n$ noise-free evaluations of $f$ at points $x_1,\dots , x_n$, the estimates of the designs' attributes are given by the posterior distribution on $f$, which is again a multi-output GP, $\mathcal{GP}\left(\mu_n,K_n\right)$, where $\mu_n$ and $K_n$ can be computed in closed form in terms of $\mu$ and $K$ \citep{liu2018remarks}.

\subsection{Statistical Model Over $U$}
We use Bayesian preference learning \citep{chu2005preference,lepird2015bayesian}
to infer a posterior probability distribution over the utility function, $U$, given preferences expressed by the DM.
Although this method is standard in the literature, we describe it here for completeness. 

We use a parametric family of utility functions  $\{U(\cdot; \theta) : \theta \in \Theta\}$,
(following, for example, \citealt{akrour2014programming, wirth2016model});
a prior probability distribution over $\theta$, $p^\theta$;
and a likelihood function, $L$, giving the conditional probability $L(a_m; U(y_m;\theta) - U(y'_m;\theta))$
of the DM expressing preference $a_m$ 
in response to an offered pair of attribute vectors $y_m$, $y'_m$
with utility difference $U(y_m;\theta) - U(y'_m;\theta)$.
The posterior distribution over $\theta$ after feedback on $m$ pairwise comparisons,
written 
$p^\theta_m(\theta)$, 
is then given by Bayes' rule:
\begin{equation*}
p^\theta_m(\theta) 
\propto p^\theta(\theta) \prod_m L(a_m; U(y_m;\theta) - U(y'_m;\theta)).
\end{equation*}
In our approach, we rely only on the ability to sample from this posterior distribution.

The most widely used parametric family of utility functions is linear functions, $U(y;\theta) = \theta^\top y$ \citep{wirth2017survey}, 
with other examples including linear functions over kernel-based feature spaces
\citep{wirth2016model,kupcsik2018learning}
and deep neural networks \citep{christiano2017deep}.
Commonly used likelihood functions include probit and logit \citep{wirth2017survey}. In our numerical experiments, for simplicity, we assume fully accurate preference responses, i.e., $L(a; \Delta) = 1\{a = \textnormal{sign}(\Delta)\}$, with parameteric families and priors described below. Although we assume parametric utility functions, conceptually, our approach generalizes to handle
nonparametric Bayesian preference learning (see, e.g., \citealt{chu2005preference}). However, this poses additional computational challenges as our approach internally performs optimization given samples of the utility function, which can be slow for nonparametric models.

\subsection{Measure of Performance}
We suppose that, after $N$ evaluations of the computational model (and $m_N$ judgments on attribute vector pairs), the DM selects her most preferred design among all evaluated designs.  Thus, the utility generated, given $\theta$, is
\begin{equation}
\label{eq:value}
\max_{i=1,\ldots, N}U(f(x_i);\theta),
\end{equation}
and we wish to adaptively choose designs to evaluate, $x_1,\ldots,x_N$, to maximize the expected value of \eqref{eq:value},
\begin{equation}
\label{eq:expected_value}
\expectation\left[\max_{i=1,\ldots, N}U(f(x_i);\theta)\right],
\end{equation}
where the expectation is taken over the prior on $\theta$ and the randomness in $x_1,\ldots,x_N$ (induced by the random first stage of samples and randomness in the DM's responses).

The full BayesOpt with preference learning loop is summarized in Algorithm ~\ref{alg:loop}.

\begin{algorithm}[h]
\begin{algorithmic}[1]
\caption{BayesOpt with preference learning}
\label{alg:loop}
\REQUIRE{Prior over $\theta$; GP prior over $f$.}
\STATE{Evaluate a few designs uniformly at random.}
\FOR{$n = 0,\ldots,N-1$}
\STATE{Choose $y_{n+1}, y_{n+1}'$ uniformly at random between attributes of previously evaluated designs.}
\STATE{Observe preference information, $a_{n+1}$, and update posterior distribution on $\theta$.}
\STATE{$x_{n+1} \gets \arg\max_{x\in \domain} \uei_n(x)$.}
\STATE{Observe evaluation of $f$ at $x_{n+1}$, and update posterior GP distribution on $f$.}
\ENDFOR
\RETURN{Pareto front of $\{f(x_1),\ldots, f(x_N)\}$}.
\end{algorithmic}
\end{algorithm}



\section{ACQUISITION FUNCTIONS}
\label{sec:policies}

We propose two novel acquisition functions, the Expected Improvement under Utility Uncertainty (EI-UU), 
and Thompson Sampling under Utility Uncertainty (TS-UU), 
for selecting points at which to query $f$.
The bulk of our development and analysis focuses on EI-UU, since this is the more difficult of the two to optimize, and this acquisition function performs best in numerical experiments. The description of TS-UU is deferred to ~\ref{append:thompson}.


\subsection{Expected Improvement Under Utility Uncertainty (EI-UU)}
\label{sec:uei}
Expected improvement is arguably the most popular acquisition function in BayesOpt. It has been successfully generalized for multi-objective and constrained optimization \citep{emmerich2006single,gardner14}, and we next show that it can be naturally generalized to our setting  as well by extending expected improvement's one-step optimality analysis \citep{jones1998efficient,frazier2018tutorial}.

 After evaluating designs $x_1,\ldots,x_n$, the utility obtained by the DM when she selects her most preferred design among this set is
\begin{equation*}
  U_n^*(f;\theta) :=  \max_{i=1,\ldots,n}U(f(x_i);\theta).
\end{equation*}
On the other hand, if we evaluate one more design, $x$, the utility obtained by the DM increases by
\begin{align*}
  &\phantom{{}={}} \max\left\{ U(f(x);\theta), U_n^*(f;\theta)\right\} -  U_n^*(f;\theta)\\ 
  &= \left\{ U(f(x);\theta) - U_n^*(f;\theta)\right\}^{+}.
\end{align*}
This difference measures improvement from sampling $x$. Thus, a natural sampling policy is to evaluate the design that maximizes the expected improvement
\begin{equation}
   \uei_n(x):= \expectation_{n}\left[\left\{ U(f(x);\theta) - U_n^*(f;\theta)\right\}^{+} \right],
\end{equation}
where the expectation is over both $f(x)$ and $\theta$, and $\E_{n}$ indicates that the expectation is computed with respect to their corresponding posterior distributions given the previous computational evaluations, $f(x_1),\dots,f(x_n)$, and DM's responses, $a_1, \ldots, a_{m_n}$.

We call $\uei$ the \textit{expected improvement under utility uncertainty} and refer to the above policy as the $\uei$ policy. By construction, this sampling policy is one-step Bayes optimal.

\subsection{Computation and Maximization of $\uei$}
In contrast with the standard expected improvement, $\uei$ cannot be computed in closed form. However, as we show next, it can still be efficiently maximized. First, we introduce some notation. Making a slight abuse of notation, we denote $K_n(x,x)$ by $K_n(x)$. We also let $C_{n}(x)$ be the lower Cholesky factor of $K_n(x)$.

We note that, for any fixed $x\in\domain$, the time-$n$ posterior distribution of $f(x)$ is normal with mean $\mu_{n}(x)$ and covariance matrix $K_{n}(x)$. Therefore, we can express $f(x) = \mu_{n}(x) + C_{n}(x)Z$, where $Z$ is a $k$-variate standard normal random vector, and thus
\begin{equation*}
    \uei_n(x) = \expectation_{n}\left[\left\{ U(\mu_{n}(x) + C_{n}(x)Z;\theta) -  U_n^*(f;\theta)\right\}^{+} \right].
\end{equation*} 
This implies that we can compute $\uei_n(x)$ using Monte Carlo as summarized in Algorithm ~\ref{alg:ei}.
\begin{algorithm}[h]
\begin{algorithmic}[1]
\caption{Computation of $\uei$}
\label{alg:ei}
\REQUIRE{Point to be evaluated, $x$; number of Monte Carlo samples, $I$.}
\FOR{$i = 1,\ldots,I$}
\STATE{Draw samples $\theta^{(i)}$ and $Z^{(i)}$, and compute $\alpha^{(i)} := \left\{ U(\mu_{n}(x) + C_{n}(x)Z^{(i)};\theta^{(i)}) -  U_n^*(f;\theta^{(i)})\right\}^{+}$.}
\ENDFOR
\STATE{Estimate $\uei_n(x)$ by $\frac{1}{I}\sum_{i=1}^{I}\alpha^{(i)}$.}
\end{algorithmic}
\end{algorithm}

In principle, the above is enough to maximize $\uei$ using a derivative-free global optimization algorithm (for non-expensive functions). However, we could optimize $\uei$ more efficiently if we were able to leverage derivative information; this is the case using the derivative information we construct in the following proposition.
\begin{proposition}
\label{thm:uei_gradient}
Under mild regularity conditions, $\uei_n$ is differentiable almost everywhere, and its gradient, when it exists, is given by
\begin{equation*}\label{eqn:ei_gradient}
    \nabla\uei_n(x) = \expectation_{n}\left[\gamma_n(x, Z;\theta)\right],
\end{equation*}
where the expectation is over $\theta$ and $Z$, and
\begin{equation*}
    \gamma_n(x, Z; \theta) = \begin{cases}
0, \textnormal{ if } U(\mu_{n}(x) + C_{n}(x)Z;\theta) \leq  U_n^*(f;\theta)\\
\nabla U(\mu_{n}(x) + C_{n}(x)Z;\theta), \textnormal{ otherwise,}
\end{cases}
\end{equation*}
where the gradient $\nabla U(\mu_{n}(x) + C_{n}(x)Z;\theta)$ is with respect to $x$.
\end{proposition}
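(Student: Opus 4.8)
The plan is to recognize $\uei_n$ as the expectation of a positive-part integrand and to justify interchanging gradient and expectation via a dominated-convergence (Leibniz) argument, the only subtlety being the single kink of the positive-part operator. Write the integrand as
\[
g(x,Z;\theta) := \left\{ h(x,Z;\theta) \right\}^{+}, \qquad h(x,Z;\theta) := U\!\left(\mu_{n}(x) + C_{n}(x)Z;\theta\right) - U_n^*(f;\theta),
\]
and note that $U_n^*(f;\theta)$ does not depend on $x$. First I would record the smoothness of the inner map: assuming the GP kernel is smooth enough, $\mu_n$ and $C_n$ are continuously differentiable in $x$, and assuming $U(\cdot;\theta)$ is differentiable in its first argument, the composition $x\mapsto U(\mu_n(x)+C_n(x)Z;\theta)$ is differentiable. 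Since $t\mapsto t^{+}$ is differentiable everywhere except at $t=0$ with derivative $1\{t>0\}$, the chain rule gives, whenever $h(x,Z;\theta)\neq 0$,
\[
\nabla_x\, g(x,Z;\theta) = 1\{h(x,Z;\theta)>0\}\,\nabla U\!\left(\mu_n(x)+C_n(x)Z;\theta\right) = \gamma_n(x,Z;\theta),
\]
with $\nabla U$ understood as a gradient with respect to $x$ in the sense of the proposition; this is exactly the claimed integrand.

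The second step is to show that for almost every fixed $x$ the kink set $\{(Z,\theta): h(x,Z;\theta)=0\}$ is null, so that $g(\cdot,Z;\theta)$ is differentiable at $x$ almost surely. Because $Z$ is a nondegenerate $k$-variate normal and $C_n(x)$ is nonsingular, the vector $\mu_n(x)+C_n(x)Z$ admits a density on $\R^{k}$; under the mild assumption that the level sets $\{u : U(u;\theta)=c\}$ have Lebesgue measure zero (e.g.\ $U(\cdot;\theta)$ smooth and non-constant), the event $h(x,Z;\theta)=0$ has probability zero after conditioning on $\theta$ and integrating over $Z$. Hence the integrand is a.s.\ differentiable in $x$, with gradient $\gamma_n$.

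The third step is the interchange itself. I would invoke a standard theorem on differentiating under the integral sign, as used in infinitesimal perturbation analysis (in the spirit of L'Ecuyer and of Broadie--Glasserman): it suffices that $g(\cdot,Z;\theta)$ be a.s.\ differentiable at $x$ (Step~2) and locally Lipschitz in $x$ near $x$ with an integrable Lipschitz constant. The latter is where the ``mild regularity conditions'' enter: if $\|\nabla U(\cdot;\theta)\|$ together with the derivatives of $\mu_n$ and $C_n$ are dominated, uniformly over a neighborhood of $x$, by a function of $(Z,\theta)$ integrable under the posterior, then $g$ is Lipschitz with integrable constant and dominated convergence applies. This yields both that $\uei_n$ is differentiable wherever the dominating bounds hold, i.e.\ almost everywhere in $x$, and that $\nabla \uei_n(x)=\E_n[\gamma_n(x,Z;\theta)]$.

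I expect the main obstacle to be Step~2 together with the domination requirement of Step~3: carefully controlling the kink of the positive part --- arguing that the crossing events $\{h=0\}$ are negligible so that the nondifferentiable set contributes nothing --- and exhibiting an integrable local Lipschitz bound valid uniformly in a neighborhood of $x$. The smoothness of $\mu_n$, $C_n$, and $U(\cdot;\theta)$ makes the pointwise chain rule routine; the real content is verifying the measure-zero and integrable-domination hypotheses that license moving $\nabla$ inside $\E_n$.
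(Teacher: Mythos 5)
Your proof is correct and shares the paper's overall skeleton --- pointwise chain rule away from the kink, a negligibility argument for the kink, and an integrable-domination hypothesis feeding a differentiation-under-the-integral theorem from the IPA literature (the paper invokes Theorem 1 of L'Ecuyer, 1990) --- but you handle the kink in a genuinely different, ``dual'' way. The paper fixes $(\theta,Z)$ and \emph{assumes} that the kink set $\{x : U(\mu_n(x)+C_n(x)Z;\theta)=U_n^*(f;\theta)\}$ is countable for almost every $(\theta,Z)$; countability (not mere Lebesgue-nullity) matters there because L'Ecuyer's theorem recovers the Lipschitz property of the sample paths from continuity plus a dominated derivative off a countable exceptional set. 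You instead fix $x$, show the kink event $\{(Z,\theta): h(x,Z;\theta)=0\}$ has probability zero via the Gaussian density of $\mu_n(x)+C_n(x)Z$ and null level sets of $U(\cdot;\theta)$, and you get the Lipschitz property for free because $x\mapsto h(x,Z;\theta)$ is differentiable \emph{everywhere}, the kink living only in $t\mapsto t^{+}$, which is $1$-Lipschitz; the two negligibility conditions are related by Fubini. What your route buys is a concrete verification of the negligibility hypothesis, which the paper leaves as an assumption; what it costs is nonsingularity of $C_n(x)$, which fails at the already-evaluated points $x_1,\dots,x_n$ (there $K_n(x_i)=0$, and for any $\theta$ under which $x_i$ is the incumbent the kink occurs with probability one), so your conclusion correctly holds only for almost every $x$, consistent with the ``differentiable almost everywhere'' wording of the statement. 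Two small points to tighten: ``smooth and non-constant'' does not by itself guarantee that all level sets of $U(\cdot;\theta)$ are Lebesgue-null (a smooth function can be constant on an open set), so you should assume, e.g., that $\nabla U(\cdot;\theta)\neq 0$ almost everywhere or that $U(\cdot;\theta)$ is real-analytic and non-constant; and positive definiteness of $K_n(x)$ away from the data should be listed explicitly among your regularity conditions rather than taken for granted.
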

Thus, $\gamma$ provides an unbiased estimator of $\nabla \uei$, which can be used  within a gradient-based stochastic optimization algorithm, such as stochastic gradient ascent, to find stationary points of $\uei$.  We may then start stochastic gradient ascent from multiple starting points and 
use simulation to evaluate the $\uei$ for each and select the best.  By increasing the number of starting points, we may find a high-quality local optimum and asymptotically find a global optimum.

A formal statement and proof of Proposition ~\ref{eqn:ei_gradient} can be found in Appendix ~\ref{append:uei_gradient}.

\subsection{Computation of $\uei$  When $U$ Is Linear}
While the above approach can be used for efficiently maximizing $\uei$ for general utility functions, 
we can make maximization even more efficient for linear utility functions, the most widely used class in practice.
\begin{proposition}
\label{uei_linear}
Suppose that $\Theta\subset\R^k$ and $U(y;\theta) = \theta^\top y$ for all $\theta\in\Theta$ and $y\in\R^k$. Then,
\begin{equation*}
\uei_n(x) =  \expectation_n\left[\Delta_n(x;\theta)\Phi(\zeta) + \sigma_n(x;\theta)\varphi(\zeta)\right],
\end{equation*}
where the expectation is over $\theta$,
$\Delta_n(x;\theta) = \theta^\top \mu_n(x) - U_n^*(f;\theta)$,
$\sigma_n(x;\theta) = \sqrt{\theta^\top K_{n}(x)\theta}$,
$\zeta = \frac{\Delta_n(x;\theta)}{\sigma_n(x;\theta)}$,
and $\varphi$ and $\Phi$ are the standard normal density function and cumulative distribution function, respectively.
\end{proposition}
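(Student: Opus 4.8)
The plan is to prove the identity by conditioning on $\theta$ and reducing the inner expectation to the closed-form expression for classical expected improvement. Starting from the definition $\uei_n(x) = \expectation_n[\{U(f(x);\theta) - U_n^*(f;\theta)\}^{+}]$, where the expectation is jointly over $f(x)$ and $\theta$, I would invoke the tower property to write $\uei_n(x) = \expectation_n\big[\,\expectation_n[\{\theta^\top f(x) - U_n^*(f;\theta)\}^{+} \mid \theta]\,\big]$, using $U(y;\theta) = \theta^\top y$. The two structural observations that make the inner expectation tractable are: (i) at time $n$ the values $f(x_1),\dots,f(x_n)$ have been observed noise-free, so $U_n^*(f;\theta) = \max_{i\le n}\theta^\top f(x_i)$ is a \emph{deterministic} function of $\theta$ and acts as a constant inside the conditional expectation; and (ii) the posterior over $f$ (informed by function evaluations) and the posterior over $\theta$ (informed by preferences over the already-observed attribute vectors) are independent, so conditioning on $\theta$ leaves $f(x) \sim \mathcal{N}(\mu_n(x), K_n(x))$ unchanged.

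Given these, the inner problem becomes the expectation of $\{W - c\}^{+}$ for a scalar Gaussian $W := \theta^\top f(x) \sim \mathcal{N}(\theta^\top\mu_n(x),\, \theta^\top K_n(x)\theta)$ and a constant $c := U_n^*(f;\theta)$. Writing $W = \theta^\top\mu_n(x) + \sigma_n(x;\theta)Z$ with $Z\sim\mathcal{N}(0,1)$ and $\sigma_n(x;\theta) = \sqrt{\theta^\top K_n(x)\theta}$, and setting $\Delta_n(x;\theta) = \theta^\top\mu_n(x) - c$ and $\zeta = \Delta_n(x;\theta)/\sigma_n(x;\theta)$, the integrand is positive exactly when $Z > -\zeta$. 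A short computation,
\[
\expectation[(\Delta_n + \sigma_n Z)^{+}] = \Delta_n\!\int_{-\zeta}^{\infty}\!\varphi(z)\,dz + \sigma_n\!\int_{-\zeta}^{\infty}\! z\,\varphi(z)\,dz = \Delta_n\Phi(\zeta) + \sigma_n\varphi(\zeta),
\]
using $\int_{-\zeta}^{\infty}\varphi = \Phi(\zeta)$ and $\int_{-\zeta}^{\infty} z\,\varphi(z)\,dz = \varphi(\zeta)$ (the latter from $\tfrac{d}{dz}\varphi(z) = -z\varphi(z)$ together with symmetry of $\varphi$), yields the stated inner value. Substituting back and taking the outer expectation over $\theta$ gives the claim.

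The computation itself is routine; the points that need care are the two structural claims above and one degenerate case. I would justify the interchange of conditioning and expectation by verifying integrability, which holds since $\{U(f(x);\theta) - U_n^*(f;\theta)\}^{+} \le |\theta^\top f(x)| + |U_n^*(f;\theta)|$ has finite expectation under the Gaussian posterior on $f(x)$ for any $\theta$ with finite second moment. The degenerate case is $\sigma_n(x;\theta) = 0$ (i.e., $\theta^\top K_n(x)\theta = 0$, for instance $\theta = 0$ or $x$ previously evaluated along direction $\theta$), where $\zeta$ is undefined; there the inner expectation is simply $\Delta_n(x;\theta)^{+}$, which coincides with the continuous extension $\lim_{\sigma\downarrow 0}[\Delta_n\Phi(\Delta_n/\sigma) + \sigma\varphi(\Delta_n/\sigma)]$, so the formula holds under the usual convention on this measure-zero set. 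I do not anticipate a genuine obstacle: the main subtlety is recognizing that observing $f$ noise-free makes $U_n^*(f;\theta)$ a constant given $\theta$, which is precisely what collapses the inner expectation to the classical one-dimensional expected-improvement formula.
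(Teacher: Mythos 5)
Your proof is correct and takes essentially the same route as the paper's: condition on $\theta$, observe that the time-$n$ posterior of $\theta^\top f(x)$ is normal with mean $\theta^\top \mu_n(x)$ and variance $\theta^\top K_n(x)\theta$, and invoke the classical scalar expected-improvement formula. The paper leaves the one-dimensional Gaussian computation, the integrability check, and the degenerate case $\sigma_n(x;\theta)=0$ implicit (``easily verified''), so your extra detail adds rigor but does not constitute a different argument.
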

The result above shows that, when each $U(\cdot;\theta)$ is linear, the computation of $\uei$ essentially reduces to that of the standard expected improvement, modulo integrating the uncertainty over $\theta$. In particular, the uncertainty with respect to $Z$ can be integrated out. Moreover, in this case one can also derive an analogous result to Proposition ~\ref{eqn:ei_gradient} for computing the gradient of EI-UU in which the explicit dependence on $Z$ is eliminated as well. Formal statements and proofs of these two result can be found in Appendix ~\ref{append:uei_linear}.



\section{ADDITIONAL RELATED WORK}
\label{sec:lit}

The introduction discusses the two lines of most closely related work: 
the point-estimate approach pursued within preference-based reinforcement learning (PbRL); and multi-objective BayesOpt.

The most closely related work in PbRL is utility-based PbRL using trajectory utilities \citep{wirth2017survey}.  This variant of PbRL seeks to design a control policy to maximize the utility of a human subject using features computed from trajectories.
Work in this area includes \cite{akrour2014programming} and \cite{wirth2016model}.
Unlike our work, the uncertainty in utility function estimates is not considered when performing optimization. 

Multi-objective BayesOpt includes \cite{knowles2006, bautista2009,binois2015quantifying, shah2016pareto, feliot2017bayesian, hl16}. Multi-objective optimization cannot easily incorporate prior information about the DM's preferences, though several attempts have been made, mostly through modified Pareto-dominance criteria or weighted-sum approaches \citep{cvetkovic2002preferences, zitzler2004indicator, rachmawati2006preference}. Most of this work is outside the BayesOpt framework, with only three exceptions, which we describe below, known to us. 

\cite{feliot2018user} proposes a weighted version of the expected Pareto hypervolume improvement approach \citep{emmerich2006single} to focus the search on certain regions of the Pareto front. However, no method is provided for choosing weights from data, in contrast with our approach's ability to learn from interactions with the DM. Moreover, this method suffers the same computational limitations of the standard expected Pareto hypervolume improvement approach, limiting its applicability to at most three objectives \citep{hl16}. \cite{abdolshah2019multi} also proposes a weighted version of the expected Pareto hypervolume improvement approach to explore the region of the Pareto frontier satisfying a preference-order constraint over the objectives. Finally, \cite{paria2018flexible} proposes an approach based on random scalarizations. In contrast with our approach, no method is available for estimating the distribution of these scalarizations from data.


Another related literature is preferential BayesOpt.
Preferential BayesOpt \citep{gonzalez2017preferential} has been applied to realistic material design in computer graphics \citep{brochu2010tutorial} and  optimization of a parameterized control policy for robotic object handover in \citep{kupcsik2018learning}.
To apply preferential BayesOpt in our setting, we would choose pairs of treatments $x$ and $x'$, evaluate our computational model $f(x)$ and $f(x')$ for each, and obtain feedback from the DM on which treatment is preferred. Pairs of treatments would then be chosen to best support the goal of finding the DM's preferred design.
Critically, these methods do not 
use the attributes, $f(x)$, except to present them to the DM, but instead learn preferences directly as a function of $x$.
Thus, these methods tend to require many queries of the DM 
\citep{wirth2017efficient,pinsler2018sample}. Our approach leverages attribute observations to be more query efficient.



 Our work is also related to a line of research on adaptive utility elicitation
\citep{ chajewska1998utility,chajewska2000making,boutilier2002pomdp,boutilier2006constraint}.
Unlike in classical utility elicitation \citep{farquhar1984state, abbas2018foundations}, which has accurate estimation as its final goal, 
this work elicits the DM's utility function with the final goal of finding a good decision, even if this leaves residual uncertainty about the utility function \citep{braziunas2006computational}. However, this work assumes that attributes are inexpensive to evaluate, and that the space of designs is finite, preventing its use in our setting.

 Our work builds on BayesOpt \citep{ frazier2018tutorial}, a framework for optimization of time-consuming-to-evaluate black-box functions.
 Our proposed EI-UU acquisition function is a natural generalization of the classical \textit{expected improvement} acquisition function in standard BayesOpt \citep{movckus1975bayesian, jones1998efficient}. EI-UU also generalizes the \textit{expected improvement for composite functions} \citep{astudillo2019bayesian}, which can be obtained as a special case when $U$ is known.
 
 Our work is also related to \cite{frazier2011guessing}, which pursued a similar approach for the pure exploration multi-attribute multi-armed bandit problem with linear utility functions and without iterative interaction with the DM. Finally, an earlier version of this work, which considered linear utility functions only and no iterative interaction with the DM, appeared at \cite{astudillo2017multi}.

\section{EXPERIMENTS}
\label{sec:experiments}
We compare the performance of our sampling policies (EI-UU and TS-UU) against the policy that chooses the points to sample uniformly at random (Random),
and ParEGO \citep{knowles2006}, a popular for multi-objective BayesOpt algorithm. To understand the benefit obtained from preference information within our proposed sampling policies, we also report their performance without preference learning, i.e., where the distribution of $\theta$ remains equal to its prior distribution throughout all evaluations of $f$. In the plots, we distinguish from our sampling policies with preference learning by appending the subindex npl (which stands for ``no preference learning'').

In all problems, an initial stage of evaluations is performed using $2(d+1)$ points chosen uniformly at random over $\domain$.  A second stage (pictured in plots) is then performed using the given sampling policy.
For our algorithms, the outputs of $f$ are modeled using independent GP prior distributions. All GP models in our experiments have a constant
mean function and ARD Mate\'rn covariance
function with smoothness parameter equal to $5/2$; the associated hyperparameters are estimated
under a Bayesian approach. As proposed in \cite{snoek2012practical}, for all algorithms, except TS-UU, we use an averaged version of the acquisition function obtained by first drawing 10 samples of the GP hyperparameters, computing the acquisition function conditioned on each of these hyperparameters, and then averaging the results; for TS-UU, a single sample of the GP hyperparameters is used.

In all problems and for each replication, we draw one sample $\theta_{\textnormal{true}}$ from the prior distribution to obtain a true underlying utility function, $U(\cdot; \theta_{\textnormal{true}})$, which is used to obtain the preference information from the DM. The performance of the algorithms is reported with respect to this true underlying utility function. 

Our code and experiments are available at \url{https://github.com/RaulAstudillo06/BOPL}.

\subsection{Synthetic Test Functions}
The first three problems use well known test functions drawn from the  evolutionary multi-objective optimization literature \citep{van1999multiobjective,deb2005scalable, knowles2006}. We define these functions in detail in Appendix ~\ref{append:test_funcs}.

Results of these experiments are shown on a logarithmic scale in Figures ~\ref{fig:dtlz1a}, ~\ref{fig:dtlz2}, and  ~\ref{fig:vlmop3}. In these three test problems, EI-UU and TS-UU substantially outperform Random and ParEGO. In the first and third problems, EI-UU outperforms TS-UU, whereas in the second problem the opposite occurs. Throughout these problems, EI-UU greatly benefits from preference information. TS-UU also benefits from preference information, especially in the first two problems

\subsubsection{DTLZ1a With a Linear Utility}
\label{sec:dtlz1a}
A general form of this test function was first introduced in \cite{deb2005scalable}. The version we use was defined in \cite{knowles2006}. This function has $k=2$ attributes and is defined over $\domain = [0, 1]^6$. In this experiment, we use a linear utility function $U(y;\theta) = \theta y_1 + (1 - \theta)y_2$, and let the prior distribution on $\theta$ be uniform over $[0, 1]$.

\begin{figure}[h]
  \centering
  \includegraphics[width=0.49\textwidth]{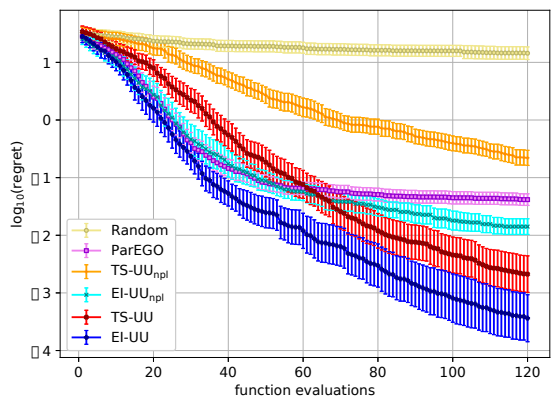}
  \caption{Average performance over 50 replications on the test problem described in \S\ref{sec:dtlz1a}. \label{fig:dtlz1a}}
\end{figure}
\subsubsection{DTLZ2 With a Quadratic Utility}
\label{sec:dtlz2}
This function was first introduced in a general form in \cite{deb2005scalable}. We use a concrete version of this function with $k=4$ attributes defined over  $\domain = [0, 1]^5$. Here, we use a quadratic utility function $U(y;\theta) = -\|y-\theta\|_2^2$, where $\theta$ is uniform over $\Theta$, and $\Theta$ consists of 8 points lying in the Pareto front of $f$, obtained as
\begin{equation*}
    \Theta = \left\{f(x): x_i\in\left\{\frac{i-1}{3}, \frac{i}{3}\right\}, i\leq3, \  x_4,x_5=0.5\right\}.
\end{equation*}
We envision that, in practice, such utility function could be used for finding designs with attributes as close as possible to an uncertain vector of ``ideal" attributes, which could take a range of values depending on the type of DM in question.

\begin{figure}[h]
  \centering
  \includegraphics[width=0.49\textwidth]{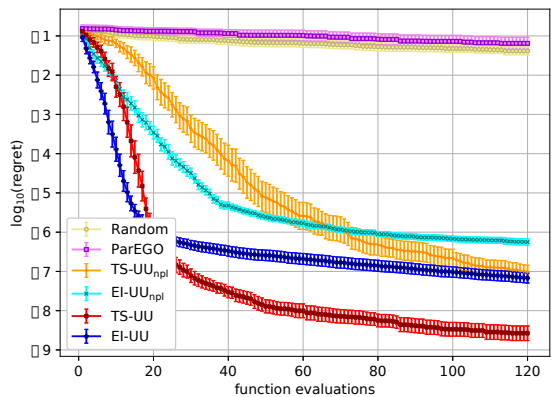}
  \caption{Average performance over 50 replications on the test problem described in \S\ref{sec:dtlz2}. \label{fig:dtlz2}}
\end{figure}

\subsubsection{VLMOP3 With an Exponential Utility}
\label{sec:vlmop3}
This test function first appeared in \cite{van1999multiobjective}. It has $k=3$ attributes and is defined over $\domain = [-3,3]^2$. Here, we use an exponential utility function
\begin{equation*}
        U(y;\theta) = \frac{1}{3}\sum_{j=1}^3\frac{1 - \exp(-\theta y_j)}{\theta},
    \end{equation*}
and let the prior on $\theta$ be uniform over $[0.1, 0.5]$. 

We note that, when $\theta \rightarrow 0$ from the right, the solution that maximizes $U(f(\cdot);\theta)$ converges to the solution that maximizes $\frac{1}{3}\sum_{j=1}^3f_j$ (neutral risk), whereas when $\theta \rightarrow \infty$, it converges to the one that maximizes $\min_{j=1,2,3}f_j$ (worst-case risk).  Therefore, if $f_1(x), \ldots, f_k(x)$ denote the outcome of an event under $k$ plausible scenarios with known likelihoods, $p_1,\ldots, p_k$ (in the above example ($p_j = 1/k$, $j=1,\ldots, k$),  this utility function provides a natural way to optimize the (expected) utility of a DM with uncertain (constant absolute) risk aversion with respect to this outcome.

\begin{figure}[h]
  \centering
  \includegraphics[width=0.49\textwidth]{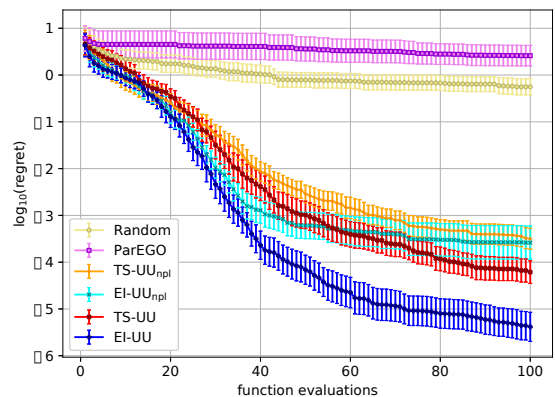}
  \caption{Average performance over 50 replications on the test problem described in \S\ref{sec:vlmop3}. \label{fig:vlmop3}}
\end{figure}

\subsection{Portfolio Simulation Optimization}
\label{sec:portfolio}
In this test problem, we use our algorithm to tune the hyperparameters of a trading strategy so as to maximize the return of a DM with an unknown risk aversion tolerance. We envision this as modeling a financial advisor that has many clients, each of which requires customized financial planning based on their own portfolio, and has a different risk tolerance.  Using choices made by past clients about which financial product they prefer, the financial advisor may form a probability distribution over utility functions  to use when using a computationally expensive simulation to develop a menu of options to show a new client.

We use CVXPortfolio \citep{cvxportfolio} to simulate and optimize the evolution of a portfolio over a period of four years, from Jan. 2012 through Dec.
2015, using open-source market data; the details of the simulation can be found in \S7.1 of \cite{cvxportfolio}. Here, $f$ has two outputs, the mean and (\textit{minus} the) standard deviation of the daily returns. We use a non-standard utility function 
that sets $U(y;\theta)$ to $y_1$ if $ \theta \leq y_2$ and $\infty$ otherwise.
This recovers the constrained optimization problem
that maximizes $f_1(x)$ subject to the constraint that $ \theta \leq f_2(x)$. Analogous to the case of linear utility functions, discussed in Proposition ~\ref{uei_linear}, it can be shown that for this class of utility functions, $\uei$ admits an expression similar to that of the \textit{constrained expected improvement} \citep{gardner14}.

Thus, in this setting we wish to maximize average return subject to an unknown constraint on the DM's risk tolerance level, $\theta$, which we assume is uniform over $[-2, -10]$ (recall that $f_2$ is \textit{minus} the standard deviation).
The hyperparameters to be tuned are the trade, hold, and risk aversion parameters over the domains $[0.1,1000]$, $[5.5, 8.]$, and $[0.1,100]$, respectively. Results are shown in Figure ~\ref{fig:portfolio}. Here, the optimal solution is unknown so we report the utility value instead. As before, EI-UU substantially outperforms Random and ParEGO, and is followed in performance by TS-UU. Both EI-UU and TS-UU benefit from preference information.

\begin{figure}[h]
  \centering
  \includegraphics[width=0.49\textwidth]{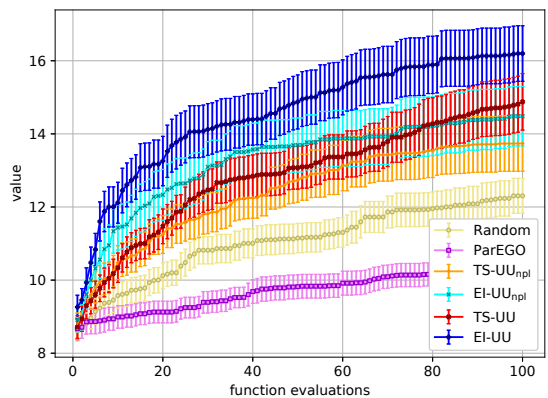}
  \caption{Average performance over 100 replications on the  test problem described in \S\ref{sec:portfolio}. \label{fig:portfolio}}
\end{figure}

\subsection{Optimization of Ambulance Bases}
\label{sec:ambulance}
In this test problem, we optimize the location of three ambulance bases according to the distribution of the response times. We consider $k=5$ attributes, representing the number of response times falling  within some given intervals of time, and assume a DM considers these attributes to choose the \textit{ideal} locations of the ambulance bases. We let $f_j$ be the number of response times falling  within $(5(j-1) \textnormal{ minutes},5j \textnormal{ minutes}]$, $j=1,\ldots, 4$, and $f_5$ be the number of those falling within the interval $(20 \textnormal{ minutes}, \infty)$. Due to the nature of these attributes, which are positive, we model their logarithms as GPs instead of the attributes directly. We then use the utility function
\begin{equation*}
    U(y;\theta) = \sum_{j=1}^5 \theta_j \frac{\exp{y_j}}{\sum_{i=1}^5\exp{y_i}},
\end{equation*}
which corresponds to a linear utility function over the  fraction of response times within the various intervals. Here, $\theta$ is taken to be uniform over the set $\Theta = \{\theta: \theta_1\geq\cdots\geq \theta_5 \geq 0\textnormal{ and } \sum_{j=1}^5\theta_j = 1\}$.

Results of this experiment are shown in Figure ~\ref{fig:ambulance}. As before, EI-UU substantially outperforms Random and ParEGO, and is followed by TS-UU. In contrast with all other test problems, however, here neither EI-UU nor TS-UU seem to benefit from preference information. A closer inspection to the data obtained from this experiment shows that there is highly concentrated region of designs that poses a high utility value for a wide range of values of $\theta$, which explains this behavior. This also suggests that our sampling policies are able to find \textit{robust} designs if they exist.

\begin{figure}[h]
  \centering
  \includegraphics[width=0.49\textwidth]{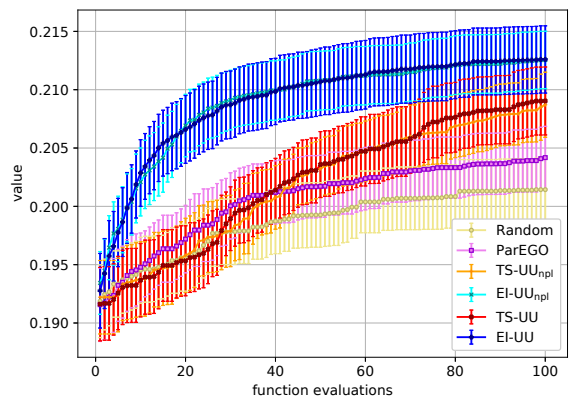}
  \caption{Average performance over 75 replications on the test problem described in \S\ref{sec:ambulance}. \label{fig:ambulance}}
\end{figure}

\section{CONCLUSION}
\label{sec:conclusion}
We introduced multi-attribute Bayesian optimization with preference learning, a novel approach for black-box global optimization of time-consuming-to-evaluate physical or computational experiments with multiple attributes that allows us to accommodate partial preference information in a principled way. By leveraging preference information, our approach is more efficient than multi-objective optimization approaches. By acknowledging uncertainty in the DM's preferences, our approach is more flexible and robust than single-objective optimization approaches that use a point estimate of the DM's utility function. Relevant directions for future work include developing more sophisticated policies for selecting the pairs of attributes to be shown to the DM, and using nonparametric models for estimating the DM's utility function.

\subsubsection*{Acknowledgements}
The authors were supported by 
NSF CMMI-1536895,
NSF CMMI-1254298,
AFOSR FA9550-15-1-0038, and
AFOSR FA9550-19-1-0283. The authors would like to thank anonymous reviewers for their comments.

\putbib
\end{bibunit}

\onecolumn
\setcounter{theorem}{0}
\appendix
\begin{bibunit}

\section{UNBIASED ESTIMATOR OF THE GRADIENT OF $\uei$}
\label{append:uei_gradient}
In this section we formally state and prove Proposition 1.
\begin{proposition}
	Suppose that $U(\cdot;\theta), \theta\in\Theta$ is differentiable for all $\theta\in\Theta$ and let $\domain'$ be an open subset of $\domain$ so that $\mu_n$ and $K_n$ are differentiable on $\domain'$ and there exists a measurable function $\eta:\R^k\rightarrow \R$ satisfying 
	\begin{enumerate}
		\item $\|\nabla U(\mu_{n}(x) + C_{n}(x)Z;\theta)\|<\eta(\theta, Z)$ for all $x\in\domain'$, $\theta\in\Theta$ and $Z\in\R^k$.
		\item $\expectation[\eta(\theta, Z)]<\infty$, where $Z$ is a $m$-variate standard normal random vector independent of $\theta$, and the expectation is over both $\theta$ and $Z$.
	\end{enumerate}
	Further, suppose that for almost every $\theta\in\Theta$ and $Z\in\R^k$ the set $\{x\in\domain' : U(\mu_{n}(x) + C_{n}(x)Z;\theta) =  U_n^*(f;\theta)\}$ is countable. Then,
	$\uei$ is differentiable on $\domain'$ and its gradient, when it exists, is given by
	\begin{equation*}\label{eqn:ei_gradient2}
	\nabla\uei(x) = \expectation\left[\gamma(x,\theta, Z)\right],
	\end{equation*}
	where the expectation is over $\theta$ and $Z$, and
	\begin{equation*}
	\gamma(x,\theta, Z) = \begin{cases}
	\nabla U(\mu_{n}(x) + C_{n}(x)Z;\theta), \textnormal{ if } U(\mu_{n}(x) + C_{n}(x)Z) >  U_n^*(f;\theta) ,\\
	0, \textnormal{ otherwise.}
	\end{cases}
	\end{equation*}
\end{proposition}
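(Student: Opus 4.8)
The plan is to prove the formula by a pathwise (infinitesimal-perturbation-analysis) argument, justifying the interchange of gradient and expectation with the dominated convergence theorem. Throughout I would write $g(x,\theta,Z) := \left\{U(\mu_{n}(x) + C_{n}(x)Z;\theta) - U_n^*(f;\theta)\right\}^{+}$, so that $\uei_n(x) = \expectation\left[g(x,\theta,Z)\right]$ and $\gamma(x,\theta,Z)$ is its claimed pathwise gradient.

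\textbf{Pathwise differentiability and domination.} First I would fix $(\theta,Z)$ and differentiate $g(\cdot,\theta,Z)$ directly. The inner map $x\mapsto \mu_{n}(x) + C_{n}(x)Z$ is differentiable on $\domain'$: $\mu_n$ and $K_n$ are differentiable there by hypothesis, and the lower-Cholesky factorization $K\mapsto C$ is a smooth map on the cone $\mathbb{S}_{++}^{k}$, so $C_n$ is differentiable wherever $K_n$ is. Composing with the differentiable $U(\cdot;\theta)$ and with $t\mapsto t^{+}$ (differentiable for $t\neq 0$, with derivative $1\{t>0\}$) and applying the chain rule shows that $g(\cdot,\theta,Z)$ is differentiable at every $x$ with $U(\mu_{n}(x)+C_{n}(x)Z;\theta)\neq U_n^*(f;\theta)$, with gradient exactly $\gamma(x,\theta,Z)$. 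Assumption~1 gives $\|\gamma(x,\theta,Z)\|<\eta(\theta,Z)$ wherever this gradient exists, so $g(\cdot,\theta,Z)$ is locally Lipschitz in $x$ with constant at most $\eta(\theta,Z)$; since $\eta$ is integrable (Assumption~2), $\uei_n$ is itself Lipschitz on $\domain'$ with constant $\expectation[\eta(\theta,Z)]$, hence, by Rademacher's theorem, differentiable at almost every $x\in\domain'$.

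\textbf{The measure-theoretic crux.} The difficulty is that $g(\cdot,\theta,Z)$ genuinely fails to be differentiable on the ``kink set'' where $U = U_n^*$, so one cannot differentiate under the expectation naively. I would resolve this with Tonelli's theorem. Let $B := \{(x,\theta,Z) : U(\mu_{n}(x)+C_{n}(x)Z;\theta) = U_n^*(f;\theta)\}$. The countability hypothesis says that for almost every $(\theta,Z)$ the $x$-section of $B$ is countable, hence Lebesgue-null; applying Tonelli to $1_B$ then shows $B$ has product measure zero, and applying Tonelli in the reverse order shows that for almost every $x\in\domain'$ the section $\{(\theta,Z):(x,\theta,Z)\in B\}$ has probability zero. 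Fixing such a ``good'' $x$, with probability one $g(\cdot,\theta,Z)$ is differentiable at $x$ with gradient $\gamma(x,\theta,Z)$.

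\textbf{Interchange and conclusion.} At a good $x$, I would write the $i$-th difference quotient of $\uei_n$ as $\expectation\left[\left(g(x+te_i,\theta,Z)-g(x,\theta,Z)\right)/t\right]$. The mean value theorem together with the local Lipschitz bound dominates each quotient in absolute value by $\eta(\theta,Z)$, while the previous step gives almost-sure convergence of the integrand to $\gamma_i(x,\theta,Z)$ as $t\to0$. Dominated convergence then permits passing the limit inside, yielding $\partial_{x_i}\uei_n(x) = \expectation\left[\gamma_i(x,\theta,Z)\right]$ for each $i$. Since almost every $x$ is simultaneously good and a point of differentiability of $\uei_n$, at such $x$ the gradient equals $\expectation\left[\gamma(x,\theta,Z)\right]$, as claimed. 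The main obstacle is exactly the Tonelli step: the calculus of the other parts is routine, but converting the hypothesis ``countable kink set in $x$ for a.e. $(\theta,Z)$'' into ``null kink event for a.e. $x$'' is precisely what makes the dominated-convergence interchange legitimate.
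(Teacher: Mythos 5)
Your proof is correct, and it shares the paper's overall pathwise (IPA-style) structure: both arguments first fix $(\theta,Z)$, use the chain rule to show $x \mapsto \left\{U(\mu_n(x)+C_n(x)Z;\theta) - U_n^*(f;\theta)\right\}^{+}$ is differentiable off the kink set with gradient $\gamma$, and then invoke the countability hypothesis to control that kink set. Where you genuinely diverge is at the crux: the paper does not prove the interchange of gradient and expectation at all, but delegates it wholesale to Theorem 1 of L'Ecuyer (1990), whose hypotheses are exactly the continuity, countable-exception differentiability, and dominated-modulus conditions established in the first half. You instead prove the interchange from scratch: Lipschitz domination plus Rademacher gives almost-everywhere differentiability of $\uei_n$, Tonelli converts ``countable kink set in $x$ for a.e.\ $(\theta,Z)$'' into ``null kink event for a.e.\ $x$,'' and the mean value bound plus dominated convergence passes the limit of difference quotients inside the expectation. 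This buys self-containedness and makes explicit the measure-theoretic content that the citation hides (in effect you re-derive L'Ecuyer's lemma in this special case), at the cost of length; the paper's version is shorter but opaque about why the countability hypothesis is the right one. One small point worth tightening: your claim that $g(\cdot,\theta,Z)$ is locally Lipschitz ``because its gradient is bounded by $\eta$ wherever it exists'' is not valid for general a.e.-differentiable functions (consider the Cantor function); here it is true either because the exceptional set is countable, or, more cleanly, because $g$ is the composition of the everywhere-differentiable inner map $x \mapsto U(\mu_n(x)+C_n(x)Z;\theta)$, whose gradient is bounded by $\eta(\theta,Z)$, with the $1$-Lipschitz function $t \mapsto t^{+}$.
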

\begin{proof}
	From the given hypothesis it follows that, for any fixed $\theta\in\Theta$ and $Z\in\R^k$, the function $x \mapsto U(\mu_{n}(x) + C_{n}(x)Z;\theta)$ is differentiable on $\domain'$. This in turn  implies that the function $x \mapsto \{U(\mu_{n}(x) + C_{n}(x)Z;\theta -  U_n^*(f;\theta)\}_{+}$ is continuous on $\domain'$ and differentiable at every $x\in\domain'$ such that $U(\mu_{n}(x) + C_{n}(x)Z;\theta \neq  U_n^*(f;\theta)$, with gradient equal to $\gamma(x,\theta, Z)$. From our assumption that for almost every $\theta$ and $Z$ the set $\{x\in\domain : U(\mu_{n}(x) + C_{n}(x)Z;\theta) =  U_n^*(f;\theta)\}$ is countable, it follows that for almost every $\theta$ and $Z$ the function $x \mapsto \{U(\mu_{n}(x) + C_{n}(x)Z;\theta -  U_n^*(f;\theta)\}_{+}$ is continuous on $\domain'$ and differentiable on all $\domain'$, except maybe on a countable subset. Using this, along with conditions 1 and 2, and Theorem 1 in \citet{l1990unified}, the desired result follows.
\end{proof}
We note that, if one imposes the stronger condition $\expectation[\eta(\theta, Z)^2]<\infty$, then $\gamma$ has finite second moment, and thus this unbiased estimator of $\nabla\uei(x)$ can be used within stochastic gradient ascent to find a stationary point of $\uei$ \citep{bottou1998online}.

\section{COMPUTATION OF $\uei$ AND ITS GRADIENT WHEN $U$ IS LINEAR}
\label{append:uei_linear}
In this section we formally state and prove Propositions 2 and 3.

\begin{proposition}
	\label{uei_linear2}
	Suppose that $U(y;\theta) = \theta^\top y$ for all $\theta\in\Theta$ and $y\in\R^k$. Then,
	\begin{equation*}
	\uei(x) =  \expectation_n\left[\Delta_n(x;\theta)\Phi\left(\frac{\Delta_n(x;\theta)}{\sigma_n(x;\theta)}\right) + \sigma_n(x;\theta)\varphi\left(\frac{\Delta_n(x;\theta)}{\sigma_n(x;\theta)}\right)\right]
	\end{equation*}
	where the expectation is over $\theta$, $\Delta_n(x;\theta) = \theta^\top \mu_n(x) -  U_n^*(f;\theta)$, $\sigma_n(x;\theta) = \sqrt{\theta^\top K_{n}(x)\theta}$, and $\varphi$ and $\Phi$ are the standard normal probability density function and cumulative distribution function, respectively.
\end{proposition}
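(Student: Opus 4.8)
The plan is to exploit the reparameterization introduced just before Algorithm~2, reduce the inner positive-part expectation to a one-dimensional Gaussian expected-improvement integral conditional on $\theta$, and then average over $\theta$ by the tower property.

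First I would fix $\theta$ and write $f(x) = \mu_{n}(x) + C_{n}(x)Z$, with $Z$ a $k$-variate standard normal vector independent of $\theta$. Linearity then gives
\begin{equation*}
U(f(x);\theta) = \theta^\top f(x) = \theta^\top \mu_{n}(x) + \theta^\top C_{n}(x) Z .
\end{equation*}
Hence, conditional on $\theta$, the scalar $U(f(x);\theta)$ is normally distributed with mean $\theta^\top \mu_{n}(x)$ and variance $\theta^\top C_{n}(x) C_{n}(x)^\top \theta = \theta^\top K_{n}(x)\theta = \sigma_n(x;\theta)^2$, where I use that $C_{n}(x)$ is the Cholesky factor of $K_{n}(x)$.

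The key observation is that $U_n^*(f;\theta) = \max_{i\le n}\theta^\top f(x_i)$ is deterministic given $\theta$, since the noise-free evaluations $f(x_1),\dots,f(x_n)$ are already observed; in particular it carries no dependence on $Z$. Therefore, conditional on $\theta$, the improvement $U(f(x);\theta) - U_n^*(f;\theta)$ is normal with mean $\Delta_n(x;\theta) = \theta^\top \mu_{n}(x) - U_n^*(f;\theta)$ and standard deviation $\sigma_n(x;\theta)$. I would then invoke the classical closed-form identity for the expected positive part of a univariate normal: if $W \sim \mathcal{N}(\Delta,\sigma^2)$ then $\expectation[\{W\}^+] = \Delta\,\Phi(\Delta/\sigma) + \sigma\,\varphi(\Delta/\sigma)$, which follows by integrating $(\Delta + \sigma z)$ against $\varphi(z)$ over $z \ge -\Delta/\sigma$ and using that $\varphi$ is even. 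Applying this with $\Delta = \Delta_n(x;\theta)$ and $\sigma = \sigma_n(x;\theta)$ gives the conditional value $\Delta_n(x;\theta)\Phi(\zeta) + \sigma_n(x;\theta)\varphi(\zeta)$, with $\zeta = \Delta_n(x;\theta)/\sigma_n(x;\theta)$. Finally I would apply the tower property, taking the expectation of this conditional value over the posterior of $\theta$, to recover the stated expression for $\uei_n(x)$.

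This argument is mostly careful bookkeeping rather than a hard step, and the main points requiring attention are two. The first is the independence of $\theta$ and $Z$, which is what licenses conditioning on $\theta$ and integrating out $Z$ first, together with the fact that $U_n^*(f;\theta)$ contributes only through $\Delta_n$ and not through the Gaussian fluctuation. The second, and the only genuine edge case, is the degenerate event $\sigma_n(x;\theta) = 0$ (equivalently $K_{n}(x)\theta = 0$), on which $\zeta$ is undefined; there the formula should be read through its continuous extension, for which the conditional expectation collapses to $\{\Delta_n(x;\theta)\}^{+}$, consistent with the value obtained above as $\sigma_n(x;\theta)\downarrow 0$. Assuming mild integrability of $\Delta_n(x;\cdot)$ and $\sigma_n(x;\cdot)$ under the posterior of $\theta$, interchanging the conditional expectation over $Z$ with the outer expectation over $\theta$ is justified, completing the proof.
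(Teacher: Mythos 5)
Your proof is correct and follows essentially the same route as the paper's: condition on $\theta$, observe that $\theta^\top f(x)$ is then univariate normal with mean $\theta^\top\mu_n(x)$ and variance $\theta^\top K_n(x)\theta$, apply the classical closed form for $\expectation[\{W\}^{+}]$, and average over $\theta$ by the tower property. The extra care you take with the degenerate case $\sigma_n(x;\theta)=0$ and the independence of $Z$ and $\theta$ is a welcome refinement but does not change the argument.
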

\begin{proof}
	Note that
	\begin{equation*}
	\uei(x) = \expectation_n\left[\expectation_{n}\left[\left\{ \theta^\top f(x) -  U_n^*(f;\theta)\right\}_{+} \mid\theta\right]\right].
	\end{equation*}
	Thus, it suffices to show that
	\begin{equation*}
	\expectation_{n}\left[\left\{ \theta^\top f(x) -  U_n^*(f;\theta)\right\}_{+} \mid\theta\right] = \Delta_n(x;\theta)\Phi\left(\frac{\Delta_n(x;\theta)}{\sigma_n(x;\theta)}\right) + \sigma_n(x;\theta)\varphi\left(\frac{\Delta_n(x;\theta)}{\sigma_n(x;\theta)}\right),
	\end{equation*}
	but this can be easily verified by noting that, conditioned on $\theta$, the time-$n$ posterior distribution of $\theta^\top f(x)$ is normal with mean $\theta^\top \mu_n(x)$ and variance $\theta^\top K_{n}(x)\theta$.
\end{proof}

\begin{proposition}
	\label{uei_gradient_linear}
	Suppose that $U(y;\theta) = \theta^\top y$ for all $\theta\in\Theta\subset \R^k$ and $y\in\R^k$, $\mu_n$ and $K_n$ are differentiable, and there exists a function $\eta:\Theta\rightarrow\R$ satisfying
	\begin{enumerate}
		\item $\left\|\left(\theta^\top \nabla\mu_n(x) \right)\Phi\left(\frac{\Delta_n(x;\theta)}{\sigma_n(x;\theta)}\right)  + \frac{\varphi\left(\frac{\Delta_n(x;\theta)}{\sigma_n(x;\theta)}\right)}{2\sigma_n(x;\theta)}\sum_{i,j=1}^m \theta_i\theta_j\nabla  K_{n}(x)_{i,j}\right\| \leq \eta(\theta)$ for all $x\in\domain$ and $\theta \in \Theta$.
		\item $\expectation[\eta(\theta)]<\infty$.
	\end{enumerate}
	Then, $\uei$ is differentiable and its gradient is given by
	\begin{equation*}
	\nabla\uei(x) =  \expectation_n\left[\left(\theta^\top \nabla\mu_n(x) \right)\Phi\left(\frac{\Delta_n(x;\theta)}{\sigma_n(x;\theta)}\right)  + \frac{\varphi\left(\frac{\Delta_n(x;\theta)}{\sigma_n(x;\theta)}\right)}{2\sigma_n(x;\theta)}\sum_{i,j=1}^m \theta_i\theta_j\nabla  K_{n}(x)_{i,j}\right].
	\end{equation*}
\end{proposition}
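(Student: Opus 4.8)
The plan is to differentiate the closed-form expression for $\uei$ obtained in Proposition~\ref{uei_linear2} by passing the gradient inside the expectation over $\theta$, with the interchange justified by the dominating function $\eta$. Write $g(x;\theta) := \Delta_n(x;\theta)\Phi(\zeta) + \sigma_n(x;\theta)\varphi(\zeta)$, where $\zeta = \Delta_n(x;\theta)/\sigma_n(x;\theta)$, so that $\uei(x) = \expectation_n[g(x;\theta)]$ by the preceding proposition.

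First I would compute the $x$-gradient of the integrand $g(\cdot;\theta)$ for a fixed $\theta$. Since the observed attribute vectors $f(x_1),\dots,f(x_n)$ are fixed under the time-$n$ posterior, the reference level $U_n^*(f;\theta)$ does not depend on $x$, so $\nabla_x\Delta_n(x;\theta) = \theta^\top\nabla\mu_n(x)$; likewise, applying the chain rule to $\sigma_n(x;\theta) = \sqrt{\theta^\top K_n(x)\theta}$ gives $\nabla_x\sigma_n(x;\theta) = \frac{1}{2\sigma_n(x;\theta)}\sum_{i,j=1}^k\theta_i\theta_j\nabla K_n(x)_{i,j}$. Applying the product and chain rules to $g$ and using the standard identity $\varphi'(\zeta) = -\zeta\varphi(\zeta)$ together with $\sigma_n(x;\theta)\,\zeta = \Delta_n(x;\theta)$, the two terms proportional to $\varphi(\zeta)\,\nabla_x\zeta$ cancel exactly, leaving
\begin{equation*}
\nabla_x g(x;\theta) = \left(\theta^\top\nabla\mu_n(x)\right)\Phi(\zeta) + \frac{\varphi(\zeta)}{2\sigma_n(x;\theta)}\sum_{i,j=1}^k\theta_i\theta_j\nabla K_n(x)_{i,j},
\end{equation*}
which is precisely the claimed integrand. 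This cancellation is the same simplification that makes the classical expected-improvement gradient clean, carried out here inside the $\theta$-integral.

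Next I would justify exchanging $\nabla_x$ and $\expectation_n$. This is where the hypotheses enter: condition~1 bounds $\|\nabla_x g(x;\theta)\|$ uniformly in $x$ by $\eta(\theta)$, and condition~2 asserts that $\eta$ is integrable against the posterior on $\theta$. These are exactly the hypotheses of a differentiation-under-the-integral-sign (Leibniz / dominated-convergence) theorem—for instance Theorem~1 in \citet{l1990unified}, the same result used to prove the preceding proposition—so the interchange is valid and yields the stated formula.

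The main obstacle is the rigorous justification of this interchange rather than the algebra, and the dominating-function hypotheses are tailored precisely to remove it. A minor point to handle is the degenerate case $\sigma_n(x;\theta)=0$: when $K_n(x)$ is positive definite this occurs only at $\theta=0$, a posterior-null event under a continuous prior on $\theta$, so it does not affect the expectation; otherwise one restricts attention to the set where $\sigma_n(x;\theta)>0$, on which both $g$ and its gradient are well defined.
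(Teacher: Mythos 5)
Your proof is correct and follows essentially the same route as the paper's: differentiate the closed-form expression from Proposition~\ref{uei_linear2} term by term, use $\varphi'(\zeta)=-\zeta\varphi(\zeta)$ and $\sigma_n(x;\theta)\zeta=\Delta_n(x;\theta)$ to cancel the $\varphi(\zeta)\nabla\zeta$ terms, and then invoke conditions 1 and 2 to justify exchanging gradient and expectation (the paper cites Theorem 16.8 of \cite{billingsley1995probability} rather than \citet{l1990unified}, but either interchange theorem suffices). Your added remark on the degenerate case $\sigma_n(x;\theta)=0$ is a reasonable extra precaution that the paper leaves implicit.
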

\begin{proof}
	Recall that
	\begin{equation*}
	\expectation_{n}\left[\left\{ \theta^\top f(x) -  U_n^*(f;\theta)\right\}_{+} \mid\theta\right] = \Delta_n(x;\theta)\Phi\left(\frac{\Delta_n(x;\theta)}{\sigma_n(x;\theta)}\right) + \sigma_n(x;\theta)\varphi\left(\frac{\Delta_n(x;\theta)}{\sigma_n(x;\theta)}\right).
	\end{equation*}
	Moreover, standard calculations show that
	\begin{equation*}
	\nabla\left[ \Delta_n(x;\theta)\Phi\left(\frac{\Delta_n(x;\theta)}{\sigma_n(x;\theta)}\right) \right] = \left(\theta^\top \nabla\mu_n(x) \right)\Phi\left(\frac{\Delta_n(x;\theta)}{\sigma_n(x;\theta)}\right) + \Delta_n(x;\theta)\varphi\left(\frac{\Delta_n(x;\theta)}{\sigma_n(x;\theta)}\right)\nabla\frac{\Delta_n(x;\theta)}{\sigma_n(x;\theta)},
	\end{equation*}
	and
	\begin{align*}
	\nabla\left[\sigma_n(x;\theta)\varphi\left(\frac{\Delta_n(x;\theta)}{\sigma_n(x;\theta)}\right)\right] &= \frac{\varphi\left(\frac{\Delta_n(x;\theta)}{\sigma_n(x;\theta)}\right)}{2\sigma_n(x;\theta)}\sum_{i,j=1}^m \theta_i\theta_j\nabla  K_{n}(x)_{i,j} + \sigma_n(x;\theta)\left[-\frac{\Delta_n(x;\theta)}{\sigma_n(x;\theta)}\varphi\left(\frac{\Delta_n(x;\theta)}{\sigma_n(x;\theta)}\right)\nabla\frac{\Delta_n(x;\theta)}{\sigma_n(x;\theta)}\right]\\
	&= \frac{\varphi\left(\frac{\Delta_n(x;\theta)}{\sigma_n(x;\theta)}\right)}{2\sigma_n(x;\theta)}\sum_{i,j=1}^m \theta_i\theta_j\nabla  K_{n}(x)_{i,j} - \Delta_n(x;\theta)\varphi\left(\frac{\Delta_n(x;\theta)}{\sigma_n(x;\theta)}\right)\nabla\frac{\Delta_n(x;\theta)}{\sigma_n(x;\theta)}.
	\end{align*}
	Thus, $\expectation_{n}\left[\left\{ \theta^\top f(x) -  U_n^*(f;\theta)\right\}_{+} \mid\theta\right]$ is a differentiable function of $x$, and its gradient is given by
	\begin{equation*}
	\nabla\expectation_{n}\left[\left\{ \theta^\top f(x) -  U_n^*(f;\theta)\right\}_{+} \mid\theta\right] = \left(\theta^\top \nabla\mu_n(x) \right)\Phi\left(\frac{\Delta_n(x;\theta)}{\sigma_n(x;\theta)}\right)  + \frac{\varphi\left(\frac{\Delta_n(x;\theta)}{\sigma_n(x;\theta)}\right)}{2\sigma_n(x;\theta)}\sum_{i,j=1}^m \theta_i\theta_j\nabla  K_{n}(x)_{i,j}.
	\end{equation*}
	From conditions 1 and 2, and theorem 16.8 in \cite{billingsley1995probability}, it follows that  $\uei$ is differentiable and its gradient is given by
	\begin{equation*}
	\nabla\uei(x) =  \expectation_n\left[\nabla\expectation_{n}\left[\left\{ \theta^\top f(x) -  U_n^*(f;\theta)\right\}_{+} \mid\theta\right]\right]
	\end{equation*}
	i.e.,
	\begin{equation*}
	\nabla\uei(x) =  \expectation_n\left[\left(\theta^\top \nabla\mu_n(x) \right)\Phi\left(\frac{\Delta_n(x;\theta)}{\sigma_n(x;\theta)}\right)  + \frac{\varphi\left(\frac{\Delta_n(x;\theta)}{\sigma_n(x;\theta)}\right)}{2\sigma_n(x;\theta)}\sum_{i,j=1}^m \theta_i\theta_j\nabla  K_{n}(x)_{i,j}\right].
	\end{equation*}
\end{proof}
We end by noting that if $\Theta$ is compact and $\mu_n$ and $K_n$ are both continuously differentiable, then
\begin{equation*}
(\theta, x) \rightarrow \left\|\left(\theta^\top \nabla\mu_n(x) \right)\Phi\left(\frac{\Delta_n(x;\theta)}{\sigma_n(x;\theta)}\right)  + \frac{\varphi\left(\frac{\Delta_n(x;\theta)}{\sigma_n(x;\theta)}\right)}{2\sigma_n(x;\theta)}\sum_{i,j=1}^m \theta_i\theta_j\nabla  K_{n}(x)_{i,j}\right\|
\end{equation*}
is continuous and thus attains its maximum value on $\Theta\times\domain$ (recall that $\domain$ is compact as well). Thus, in this case conditions 1 and 2 are satisfied by the constant function
\begin{equation*}
\eta \equiv \max_{(\theta, x)\in\Theta\times\domain} \left\|\left(\theta^\top \nabla\mu_n(x) \right)\Phi\left(\frac{\Delta_n(x;\theta)}{\sigma_n(x;\theta)}\right)  + \frac{\varphi\left(\frac{\Delta_n(x;\theta)}{\sigma_n(x;\theta)}\right)}{2\sigma_n(x;\theta)}\sum_{i,j=1}^m \theta_i\theta_j\nabla  K_{n}(x)_{i,j}\right\|.
\end{equation*}

\section{THOMPSON SAMPLING UNDER UTILITY UNCERTAINTY (TS-UU)}
\label{append:thompson}
Thompson sampling for utility uncertainty (TS-UU) generalizes the well-known Thompson sampling method \citep{thompson1933likelihood} to  our setting. TS-UU works as follows. It first samples $\theta$ from its posterior distribution.
Then, it samples $f$ from its Gaussian process posterior distribution.
The point at which it evaluates $f$ next is the one that maximizes $U(f(x);\theta)$ for the samples of $f$ and $\theta$. This contrasts with the point-estimate approach in that it samples $\theta$ from its posterior rather than simply setting it equal to a point estimate.  For example, if we implemented this point-estimate approach using standard Thompson sampling, we would sample only $f$ from its posterior and then optimize $U(f(x);\hat{\theta})$ where $\hat{\theta}$ is a point estimate, such as the maximum a posteriori estimate. TS-UU can induce substantially more exploration than this more classical approach.

TS-UU can be implemented by sampling $f(x)$ over a grid of points if $x$ is low-dimensional.  It can also be implemented for higher-dimensional $x$ by optimizing $f$ with a method for continuous nonlinear optimization (like CMA, \cite{hansen2016cma}), lazily sampling from the posterior on $f$ each new point that CMA wants to evaluate, conditioning on previous real and sampled evaluations. We use the latter approach in our numerical experiments.

\section{EXPLORATION AND EXPLOITATION TRADE-OFF}
One of the key properties of the classical expected improvement acquisition function is that it is increasing with respect to both the posterior mean and variance. This means that it prefers to sample points that are either promising with respect to our current knowledge or are still highly uncertain,  an appealing property for a sampling policy aiming to balance exploitation and exploration. The  following result shows that, under certain conditions, the $\uei$ sampling policy satisfies an analogous property.

\begin{proposition}
	Suppose that for every $\theta\in\Theta$  $U(\cdot;\theta)$ is convex and non-decreasing. Also suppose $x,x'\in\domain$ are such that $\mu_{n}(x)\geq\mu_{n}(x')$ and $K_{n}(x)\gtrsim K_{n}(x')$, where the first inequality is coordinate-wise and $\gtrsim$ denotes the partial order defined by the cone of positive semi-definite matrices. Then, 
	\begin{equation*}
	\uei_n(x)\geq\uei_n(x').
	\end{equation*}
\end{proposition}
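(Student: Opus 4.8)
The plan is to condition on $\theta$, reduce the claim to a Gaussian-to-Gaussian stochastic comparison that holds for each fixed utility parameter, and then integrate over the posterior on $\theta$. The key observation is that the incumbent $U_n^*(f;\theta) = \max_{i=1,\ldots,n} U(f(x_i);\theta)$ depends only on the already-evaluated designs $x_1,\ldots,x_n$ and on $\theta$; given the noise-free observations it is a deterministic quantity $c(\theta)$ that does \emph{not} depend on the candidate point. Using the reparameterization $f(x) = \mu_n(x) + C_n(x)Z$ with $Z$ standard normal and independent of $\theta$ (the GP posterior on $f$ and the preference posterior on $\theta$ factorize, since the queried attribute vectors are observed constants), together with Fubini, I would write
\begin{equation*}
\uei_n(x) = \expectation_{\theta}\left[\expectation_Z\left[g_\theta\!\left(\mu_n(x) + C_n(x)Z\right)\right]\right], \qquad g_\theta(y) := \left\{U(y;\theta) - c(\theta)\right\}^{+}.
\end{equation*}
It then suffices to prove, for each fixed $\theta$, that the inner expectation is at least as large at $x$ as at $x'$.

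Next I would record the two structural properties of $g_\theta$ that drive everything. Since $U(\cdot;\theta)$ is non-decreasing coordinatewise and $t \mapsto \{t - c(\theta)\}^{+}$ is non-decreasing, $g_\theta$ is non-decreasing; since $U(\cdot;\theta)$ is convex and $t \mapsto \{t - c(\theta)\}^{+}$ is convex and non-decreasing, the composition $g_\theta$ is convex. With these in hand, and writing $m = \mu_n(x)$, $\Sigma = K_n(x)$, $m' = \mu_n(x')$, $\Sigma' = K_n(x')$, the inner comparison becomes the following statement: if $W \sim \mathcal{N}(m,\Sigma)$ and $W' \sim \mathcal{N}(m',\Sigma')$ with $m \geq m'$ coordinatewise and $\Sigma \gtrsim \Sigma'$, then $\expectation[g_\theta(W)] \geq \expectation[g_\theta(W')]$.

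I would prove this through an intermediate variable $W'' \sim \mathcal{N}(m',\Sigma)$, splitting into a mean step and a covariance step. For the mean step, realize $W$ and $W''$ on the same $Z$ as $W = W'' + (m - m')$ with $m - m' \geq 0$; monotonicity of $g_\theta$ gives $g_\theta(W) \geq g_\theta(W'')$ pointwise in $Z$, hence in expectation. For the covariance step I would use the hypothesis $\Sigma - \Sigma' \gtrsim 0$ to write $W'' \stackrel{d}{=} W' + V$ with $V \sim \mathcal{N}(0,\Sigma-\Sigma')$ independent of $W'$; conditioning on $W'$ and applying Jensen's inequality to the convex $g_\theta$ gives $\expectation[g_\theta(W'+V)\mid W'] \geq g_\theta\!\left(W' + \expectation[V\mid W']\right) = g_\theta(W')$, using $\expectation[V\mid W'] = \expectation[V] = 0$ by independence, so $\expectation[g_\theta(W'')] \geq \expectation[g_\theta(W')]$. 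Chaining the two steps yields the inner inequality, and integrating over the posterior on $\theta$ completes the proof.

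The main obstacle — the only genuinely nontrivial point — is the covariance step: converting the Loewner-order hypothesis $K_n(x) \gtrsim K_n(x')$ into something usable. The device of representing the larger-covariance Gaussian as the smaller one plus an independent Gaussian perturbation (legitimate precisely because $\Sigma - \Sigma'$ is positive semidefinite) is what makes Jensen applicable, and it is exactly here that the convexity of $U(\cdot;\theta)$ is consumed; the mean step and the reduction to fixed $\theta$ are just monotonicity and integration bookkeeping.
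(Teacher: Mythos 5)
Your proof is correct and follows essentially the same route as the paper's: both reduce to a fixed-$\theta$ Gaussian comparison through the intermediate distribution $\mathcal{N}(\mu_n(x'), K_n(x))$, handling the mean shift by monotonicity and the covariance gap by writing the larger-covariance Gaussian as the smaller one plus an independent $\mathcal{N}(0, K_n(x)-K_n(x'))$ increment and applying Jensen's inequality. The only difference is presentational: the paper chains both inequalities in a single display rather than isolating the two steps as separate lemma-like claims.
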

\begin{proof}
	Since $K_{n}(x)\gtrsim K_{n}(x')$, we have that $f(x) \overset{d}{=} f(x') + (\mu_n(x) - \mu_n(x')) + W$, where $W$ is a $k$-variate normal random vector with zero mean and covariance matrix $K_{n}(x)- K_{n}(x')$ independent of $f(x')$. Thus,
	\begin{align*}
	\expectation_{n}\left[\left\{ U(f(x);\theta) -  U_n^*(f;\theta)\right\}_+ \mid\theta\right]
	&= \expectation_{n}\left[\left\{ U(f(x') + (\mu_n(x) - \mu_n(x')) + W;\theta) -  U_n^*(f;\theta)\right\}_+ \mid\theta\right]\\
	&\geq \expectation_{n}\left[\left\{ U(f(x')  + W;\theta) -  U_n^*(f;\theta)\right\}_+ \mid\theta\right]\\
	&= \expectation_{n}\left[\expectation_{n}\left[\left\{ U(f(x')  + W;\theta) -  U_n^*(f;\theta)\right\}_+ \mid\theta, f(x')\right] \right]\\
	&\geq \expectation_{n}\left[\left\{ U(f(x');\theta) -  U_n^*(f;\theta)\right\}_+ \mid \theta \right],
	\end{align*}
	where the first and second inequalities follow from the fact that the function $y\mapsto \left\{ U(y;\theta) -  U_n^*(f;\theta)\right\}_+$ is increasing and convex, respectively, along with Jensen's inequality. Finally, taking expectations with respect to $\theta$ yields the desired result.
\end{proof}
This result implies, for example, that for linear utility functions, the EI-UU sampling policy exhibits the behavior described above. We also note, however, that most utility functions used in practice are concave instead of convex.
\section{SYNTHETIC TEST FUNCTIONS DEFINITIONS}
\label{append:test_funcs}
\subsection{DTLZ1a}
A general form of this test function was first introduced in \cite{deb2005scalable}. The version we use was defined in \cite{knowles2006}.
It is defined over  $\domain = [0, 1]^6$, and has $k=2$ attributes given by
\begin{align*}
f_1(x) &= -0.5x_1\left((1 + g(x)\right)\\
f_2(x) &= -0.5(1 - x_1)\left((1 + g(x)\right),
\end{align*}
where
\begin{equation*}
g(x) = 100\left(5 + \sum_{i=2}^6\left[(x_i -0.5)^2 - \cos\left(2\pi(x_i - 0.5)\right)\right]\right).
\end{equation*}
The Pareto optimal set of designs consists of those such that $x_i = 0.5$, $i=2,\ldots, 6$, and $x_1$ may take any value in $[0,1]$. The Pareto front is a segment of the hyperplane $y_1 + y_2 = -0.5$.
\subsection{DTLZ2}
This function was first introduced in a general form in \cite{deb2005scalable}. In our experiment, we use a concrete version of it with $k=4$ attributes defined over $\domain = [0, 1]^5$. The attributes are  
\begin{align*}
f_1(x) &= -\left(1 + g(x)\right) \prod_{i=1}^3\cos\left(\frac{\pi}{2}x_i \right)\\
f_2(x) &= -(1 + g(x)) \left(\prod_{i=1}^2\cos\left(\frac{\pi}{2}x_i \right)\right)\sin\left(\frac{\pi}{2}x_3\right),\\
f_3(x) &= -(1 + g(x))\cos\left(\frac{\pi}{2}x_1\right)\sin\left(\frac{\pi}{2}x_2\right),\\
f_4(x) &= -(1 + g(x))\sin\left(\frac{\pi}{2}x_1\right),
\end{align*}
where
\begin{equation*}
g(x) =\sum_{i=4}^5(x_i - 0.5).
\end{equation*}.
\subsection{VLMOP3}
This test function first appeared in \cite{van1999multiobjective}. It is defined over $\domain = [-3, 3]^2$ and has $k=3$ attributes given by 
\begin{align*}
f_1(x) &= -0.5(x_1^2 + x_2^2) - \sin(x_1^2 + x_2^2),\\
f_2(x) &= -\frac{(3x_1 - 2x_2 + 4)^2}{8} - \frac{(x_1 - x_2 + 1)^2}{27} - 15,\\
f_3(x) &= -\frac{1}{x_1^2 + x_2^2 + 1} + 1.1\exp\left(-x_1^2 - x_2^2\right).
\end{align*}	
	
\putbib
\end{bibunit}

\end{document}